\newtheorem{prop}{Proposition}
\newtheorem{theorem}{Theorem}[section]
\newenvironment{proof}[1][Proof]{\begin{trivlist}
\item[\hskip \labelsep {\bfseries #1}]}{\end{trivlist}}
\newenvironment{definition}[1][Definition]{\begin{trivlist}
\item[\hskip \labelsep {\bfseries #1}]}{\end{trivlist}}
\DeclareMathOperator*{\argmin}{arg\,min}
\def\BState{\State\hskip-\ALG@thistlm}
\begin{document}

\begin{frontmatter}



\title{Piece-wise quadratic approximations of arbitrary error functions for fast and robust machine learning}


\author[LeicMath]{A.N. Gorban\corref{cor1}}
\ead{ag153@le.ac.uk}
\author[LeicMath]{E.M. Mirkes}
\ead{em322@le.ac.uk}
\author[Curie]{A. Zinovyev}
\ead{Andrei.Zinovyev@curie.fr}

\address[LeicMath]{Department of Mathematics, University of Leicester, Leicester, LE1 7RH, UK}
 \address[Curie]{Institut Curie, PSL Research University, Mines Paris Tech, Inserm, U900, F-75005, Paris, France}

\cortext[cor1]{Corresponding author}

\begin{abstract}
Most of machine learning approaches have stemmed from the application of minimizing the mean squared distance principle, based on the
computationally efficient quadratic optimization methods. However, when faced with high-dimensional and noisy data, the quadratic error functionals
demonstrated many weaknesses including high sensitivity to contaminating factors and dimensionality curse. Therefore, a lot of recent applications
in machine learning exploited properties of non-quadratic error functionals based on $L_1$ norm or even sub-linear potentials corresponding to quasinorms $L_p$ ($0<p<1$).
The back side of these approaches is increase in computational cost for optimization. Till so far, no approaches
have been suggested to deal with {\it arbitrary} error functionals, in a flexible and computationally efficient framework.
In this paper, we develop a theory and basic universal data approximation algorithms ($k$-means, principal components, principal manifolds and graphs, regularized and sparse regression),
based on piece-wise quadratic error potentials of subquadratic growth (PQSQ potentials). We develop a new and universal framework to
minimize {\it arbitrary sub-quadratic error potentials} using an algorithm with guaranteed fast convergence
to the local or global error minimum. The theory of PQSQ potentials is based on the notion of the cone of minorant functions,
and represents a natural approximation formalism based on the application of min-plus algebra.
The approach can be applied in most of existing machine learning methods, including methods of data approximation and regularized and sparse regression, leading to the improvement in the computational cost/accuracy trade-off. We demonstrate that on synthetic and real-life datasets PQSQ-based machine learning methods achieve orders of magnitude faster computational performance than the corresponding state-of-the-art methods, having similar or better approximation accuracy.
\end{abstract}

\begin{keyword}
data approximation \sep nonquadratic potential \sep principal components \sep clustering \sep regularized regression \sep sparse regression


\end{keyword}

\end{frontmatter}


\section{Introduction}
\label{S:1}

Modern machine learning and artificial intelligence methods are revolutionizing many fields of science today, such as medicine, biology, engineering, high-energy physics and sociology, where large amounts of data have been collected due to the emergence of new high-throughput computerized technologies. Historically and methodologically speaking, many machine learning algorithms have been based on minimizing the mean squared error potential, which can be explained by tractable properties of normal distribution and existence of computationally efficient methods for quadratic optimization. However, most of the real-life datasets are characterized by strong noise, long-tailed distributions, presence of contaminating factors, large dimensions. Using quadratic potentials can be drastically compromised by all these circumstances: therefore, a lot of practical and theoretical efforts have been made in order to exploit the properties of non-quadratic error potentials which can be more appropriate in certain contexts. For example, methods of regularized and sparse regression such as lasso and elastic net based on the properties of $L_1$ metrics \cite{Tibshirani1996, Zou2005} found numerous applications in bioinformatics \cite{Barillot2012}, and $L_1$ norm-based methods of dimension reduction are of great use in automated image analysis \cite{Wright2010}. Not surprisingly, these approaches come with drastically increased computational cost, for example, connected with applying linear programming optimization techniques which are substantially more expensive compared to mean squared error-based methods.

In practical applications of machine learning, it would be very attractive to be able to deal with {\it arbitrary error potentials}, including those based on $L_1$ or fractional quasinorms $L_p$ ($0<p<1$), in a computationally efficient and scalable way. There is a need in developing methods allowing to tune the {\it computatio\-nal cost/accuracy of optimization} trade-off accordingly to various contexts.

In this paper, we suggest such a universal framework able to deal with a large family of error potentials. We exploit the fact that finding a minimum of a piece-wise quadratic function, or, in other words, a function which is the {\it minorant of a set of quadratic functionals}, can be almost as computationally efficient as optimizing the standard quadra\-tic potential. Therefore, if a given arbitrary potential (such as  $L_1$-based or fractional quasinorm-based) can be approximated by a piece-wise quadratic function, this should lead to relatively efficient and simple optimization algorithms.  It appears that only potentials of quadratic or subquadratic growth are possible in this approach: however, these are the most usefull ones in data analysis. We introduce a rich family of piece-wise quadratic potentials of subquadratic growth (PQSQ-potentials), suggest general approach for their optimization and prove convergence of a simple iterative algorithm in the most general case. We focus on the most used methods of data dimension reduction and regularized regression: however, potential applications of the approach can be much wider.

Data dimension reduction by constructing explicit low-dimensional approximators of a finite set of vectors is one of the most fundamental approach in data analysis. Starting from the classical data approximators such as $k$-means \cite{Lloyd1957} and linear principal components (PCA) \cite{Pearson1901On}, multiple generalizations have been suggested in the last decades (self-organizing maps, principal curves, principal manifolds, principal graphs, principal trees, etc.)\cite{Gorban2009,Gorban2008Principal} in order to make the data approximators more flexible and suitable for complex data structures.

We solve the problem of approximating a finite set of vectors ${\vec{x}_i}\in R^m,\, i=1,\ldots, N$ (data set) by a simpler object $L$ embedded into the data space, such that for each point $\vec{x}_i$ an approximation error $err(\vec{x}_i,L)$ function can be defined. We assume this function in the form

\begin{equation}\label{distance_function}
err(\vec{x}_i,L) = \min_{y\in L} \sum_k u(x_i^k-y^k),
\end{equation}
where the upper $k=1,\ldots ,m$ stands for the coordinate index, and $u(x)$ is a monotonously growing symmetrical function, which we will be calling the error potential. By data approximation we mean that the embedment of $L$ in the data space minimizes the error

$$
\sum_i err(\vec{x}_i,L) \rightarrow \min.
$$

Note that our definition of error function is coor\-dinate-wise (it is a sum of error potential over all coordinates).

The simplest form of the error potential is qua\-dratic $u(x)=x^2$, which leads to the most known data approximators: mean point ($L$ is a point), principal points ($L$ is a set of points) \cite{Flury1990}, principal components ($L$ is a line or a hyperplane) \cite{Pearson1901On}. In more advanced cases, $L$ can posses some regular properties leading to principal curves ($L$ is a smooth line or spline) \cite{Hastie1984}, principal manifolds ($L$ is a smooth low-dimensional surface) and principal graphs (eg., $L$ is a pluri-harmonic graph embedment) \cite{gorban2007topological,Gorban2009}.

There exist multiple advantages of using qua\-dra\-tic potential $u(x)$, because it leads to the most computationally efficient algorithms usually based on the splitting schema, a variant of expectation-mi\-ni\-mi\-za\-tion approach \cite{Gorban2009}. For example, $k$-means algorithm solves the problem of finding the set of principal points and the standard iterative Singular Value Decomposition finds principal components. However, qua\-dra\-tic potential is known to be sensitive to outliers in the data set. Also, purely qua\-dra\-tic potentials can suffer from the curse of dimensionality, not being able to robustly discriminate `close' and `distant' point neighbours in a high-dimen\-sio\-nal space \cite{Aggarwal2001}.

There exist several widely used ideas for increasing approximator's robustness in presence of strong noise in data such as: (1) using medians instead of mean values, (2) substituting quadratic norm by $L_1$ norm (e.g. \cite{Ding2006, hauberg2014}), (3) outliers exclusion or fixed weighting or iterative reweighting during optimizing the data approximators (e.g. \cite{Xu1995,Allende2004,kohonen2001self}), and (4) regularizing the PCA vectors by $L_1$ norm \cite{Jolliffe2003,Candes2011,Zou2006}. In some works, it was suggested to utilize `trimming' averages, e.g. in the context of the $k$-means clustering or some generalizations of PCA \cite{cuesta1997,hauberg2014}). In the context of regression, iterative reweighting is exploited to mimic the properties of $L_1$ norm \cite{Lu2015}. Several algorithms for constructing PCA with $L_1$ norm have been suggested \cite{Ke2005,Kwak2008,Brooks2013} and systematically benchmarked \cite{brooks2012pcal1,Park2014}. Some authors go even beyond linear metrics and suggests that fractional quasinorms $L_p$ ($0<p<1$) can be more appropriate in high-dimensional data approximation \cite{Aggarwal2001}.

However, most of the suggested approaches exploiting properties of non-quadratic metrics either represent useful but still arbitrary heuristics or are not sufficiently scalable. The standard approach for minimizing $L_1$-based norm consists in solving a linear programming task. Despite existence of many efficient linear programming optimizer implementations, by their nature these computations are much slower than the iterative methods used in the standard SVD algorithm or $k$-means.

In this paper, we provide implementations of the standard data approximators (mean point, $k$-means, principal components) using a PQSQ potential. As an other application of PQSQ-based framework in machine learning, we develop PQSQ-based regularized and sparse regression (imitating the properties of lasso and elastic net).

\section{Piecewise quadratic potential of subqua\-dratic growth (PQSQ)}
\label{S:2}

\subsection{Definition of the PQSQ potential}

Let us split all non-negative numbers $x\in R_{\geq 0}$ into $p+1$ non-intersecting intervals $R_0=[0;r_1), R_1=[r_1;r_2), \ldots , R_k=[r_k;r_{k+1}), \ldots , R_p=[r_p;\infty)$,  for a set of thresholds $r_1<r_2<\ldots <r_p$. For convenience, let us denote $r_0=0, r_{p+1} = \infty$. Piecewise quadratic potential is a continuous monotonously growing function $u(x)$ constructed from pieces of centered at zero parabolas $y=b_k+a_kx^2$, defined on intervals $x\in[r_k,r_{k+1})$, satisfying $y(r_i)=f(r_i)$  (see Figure~\ref{potential}):

\begin{equation}\label{PQSQ_f}
u(x)=
b_k+a_kx^2, \mbox{if } r_k \leq |x|<r_{k+1},\,  k=0, \ldots , p,
\end{equation}

\begin{equation}\label{PQSQ_acoeffs}
a_k = \frac{f(r_k)-f(r_{k+1})}{r_k^2-r_{k+1}^2},
\end{equation}

\begin{equation}\label{PQSQ_bcoeffs}
b_k = \frac{f(r_{k+1})r_k^2-f(r_{k})r_{k+1}^2}{r_k^2-r_{k+1}^2},
\end{equation}

\noindent where $f(x)$ is a majorating function, which is to be approximated (imitated) by $u(x)$. For example, in the simplest case $f(x)$ can be a linear function: $f(x)=x$, in this case, $\sum_k u(x^k)$ will approximate the $L_1$-based error function.

Note that accordingly to (\ref{PQSQ_acoeffs},\ref{PQSQ_bcoeffs}), $b_0=0, a_p=0, b_p=f(r_p)$. Therefore, the choice of $r_p$ can naturally create a `trimmed' version of error potential $u(x)$ such that some data points (outliers) do not have any contribution to the gradient of $u(x)$, hence, will not affect the optimization procedure. However, this set of points can change during minimizaton of the potential.

The condition of subquadratic growth consists in the requirement $a_{k+1}\leq a_{k}$ and $b_{k+1} \geq b_{k}$. To guarantee this, the following simple condition on $f(x)$ should be satisfied:

\begin{equation}
\label{eq:condition_function}
f'>0, \>\>\> f''x \leq f'.
\end{equation}
Therefore,  $f(x)$ is a monotonic  concave function of $q=x^2$: $$\frac{d^2f(\sqrt{q})}{dq^2}=\frac{1}{4x^2}f''(x)-\frac{1}{4x^3}f'(x)\leq 0.$$
In particular, $f(x)$ should grow not faster than any parabola $ax^2+c,\, c>0$, which is  tangent to $f(x)$.

\begin{figure}[ht]
\centering\includegraphics[width=0.75\linewidth]{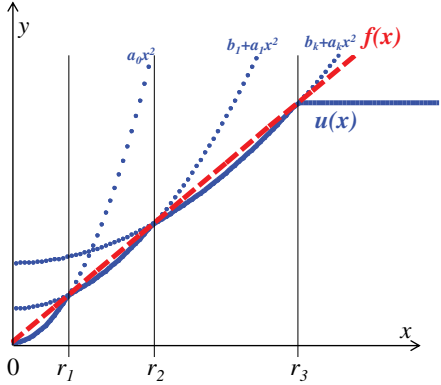}
\caption{Trimmed piecewise quadratic potential of subquadratic growth $u(x)$ (solid blue line) defined for the majorating function $f(x)$ (red dashed line) and several thresholds $r_k$. Dotted lines show the parabolas which fragments are used to construct $u(x)$. The last parabola is flat ($a_p=0$) which corresponds to trimmed potential. \label{potential}}
\end{figure}

\subsection{Basic approach for optimization}

In order to use the PQSQ potential in an algorithm, a set of $p$ interval thresholds $r_s^k, \, s=1, \ldots ,p$ for each coordinate $k=1, \ldots ,m$ should be provided. Matrices of $a$ and $b$ coefficients defined by (\ref{PQSQ_acoeffs},\ref{PQSQ_bcoeffs}) based on interval definitions: $a_s^k, b_s^k$, $s=0, \ldots ,p$, $k=1, \ldots ,m$ are computed separately for each coordinate $k$.

Minimization of PQSQ-based functional consists in several basic steps which can be combined in an algorithm:

1) For each coordinate $k$, split all data point indices into non-overlapping sets $\mathcal{R}_s^k$:

\begin{equation}
\mathcal{R}_s^k= \{i: r_{s}^k \leq |x_i^k-\beta^k_i| < r_{s+1}^k\},\,  s = 0, \ldots ,p,
\end{equation}

\noindent where $\bf{\beta}$ is a matrix which depends on the nature of the algorithm.

2) Minimize PQSQ-based functional where each set of points $\{x_{i\in \mathcal{R}_s^k}\}$ contributes
to the functional quadratically with coefficient $a_s^k$. This is a quadratic optimization task.

3) Repeat (1)-(2) till convergence.

\section{General theory of the piece-wise convex potentials as the cone of minorant functions}\label{ConvergenceSection}

In order to deal in most general terms with the data approximation algorithms based on PQSQ potentials, let us consider a general case where a potential can be constructed from a set of functions $\{q_i(x)\}$ with only two requirements: 1) that each $q_i(x)$ has a (local) minimum; 2) that the whole set of all possible $q_i(x)$s forms a cone. In this case, instead of the operational definition (\ref{PQSQ_f}) it is convenient to define the potential $u(x)$ as the minorant function for a set of functions as follows. For convenience, in this section, $x$ will notify a vector $\vec{x}\in R^m$.

Let us consider {\it a generating cone of functions} $Q$. We remind that the definition of a cone implies that for any $q(x)\in Q, p(x)\in Q$, we have $\alpha q(x)+\beta p(x)\in Q$, where $\alpha\geq 0, \beta \geq 0$.

For any finite set of functions $${q_1(x)\in Q, q_2(x)\in Q,\ldots , q_s(x)\in Q},$$ we define the minorant function (Figure \ref{OptimizationFigure}):
\begin{equation}\label{minorant}
u_{q_1,q_2,\ldots ,q_s}(x) = \min(q_1(x),q_2(x),\ldots ,q_s(x)).
\end{equation}

It is convinient to introduce a multiindex $$I_{q_1,q_2,\ldots ,q_s}(x)$$ indicating which particular function(s) $q_i$ corresponds to the value of $u(x)$, i.e.
\begin{equation}\label{minorant_index}
I_{q_1,q_2,\ldots,q_s}(x) = \{i|u_{q_1,q_2,\ldots ,q_s}(x)=q_i(x)\}.
\end{equation}

For a cone $Q$ let us define a set of all possible minorant functions $\mathbb{M}(Q)$

\begin{equation}\label{minorant_index1}
\begin{split}
\mathbb{M}(Q) = \{ u_{q_{i_1},q_{i_2},\ldots ,q_{i_n}} | q_{i_1}\in Q, q_{i_2}\in Q, \\ q_{i_n}\in Q,\, n = 1,2,3,\dots \}.
\end{split}
\end{equation}

\begin{prop}\label{Misacone}
$\mathbb{M}(Q)$ is a cone.
\end{prop}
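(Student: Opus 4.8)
The plan is to verify directly that $\mathbb{M}(Q)$ is closed under nonnegative linear combinations, i.e. that for any $u, v \in \mathbb{M}(Q)$ and any $\alpha, \beta \geq 0$ we have $\alpha u + \beta v \in \mathbb{M}(Q)$, which is exactly the cone property as stated in the excerpt. Since every element of $\mathbb{M}(Q)$ is by definition a pointwise minimum of finitely many functions drawn from $Q$, I would write $u = \min(q_1,\ldots,q_s)$ and $v = \min(p_1,\ldots,p_t)$ with all $q_i, p_j \in Q$, and reduce the claim to two elementary facts about how the min operation interacts with the cone structure of $Q$.

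First I would handle scalar multiplication. For $\alpha \geq 0$ the min operation commutes with multiplication, $\alpha\min(q_1,\ldots,q_s) = \min(\alpha q_1,\ldots,\alpha q_s)$, which holds pointwise because scaling by a nonnegative number preserves order. Since $Q$ is a cone, each $\alpha q_i \in Q$, so $\alpha u$ is again a minorant of finitely many functions from $Q$, hence $\alpha u \in \mathbb{M}(Q)$.

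The key step is closure under addition, which rests on the min-plus distributive identity
$$\min_{1 \leq i \leq s} q_i(x) + \min_{1 \leq j \leq t} p_j(x) = \min_{i,j}\bigl(q_i(x) + p_j(x)\bigr),$$
valid at every point $x$. I would verify this pointwise: the left side equals $q_{i_0}(x) + p_{j_0}(x)$ for the indices $i_0, j_0$ achieving the two separate minima, and no pair $(i,j)$ can yield a strictly smaller sum. Applying the cone property of $Q$, every $q_i + p_j \in Q$, so the right-hand side exhibits $u + v$ as the pointwise minimum of the finite family $\{q_i + p_j\}$ of elements of $Q$; therefore $u + v \in \mathbb{M}(Q)$.

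Combining the two facts gives $\alpha u + \beta v = \min_{i,j}(\alpha q_i + \beta p_j) \in \mathbb{M}(Q)$, completing the argument. I do not anticipate a genuine obstacle here: the entire content is the min-plus distributivity, precisely the tropical-algebra structure invoked in the introduction. The only point requiring slight care is confirming that finiteness of the index sets is preserved — the family $\{q_i + p_j\}$ has $st$ members, still finite — so that the resulting function genuinely lies in $\mathbb{M}(Q)$ and not in some larger infinite-min closure.
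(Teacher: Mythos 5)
Your argument is correct and is essentially the paper's own proof: both establish the identity $\alpha\min_i q_i + \beta\min_j p_j = \min_{i,j}(\alpha q_i + \beta p_j)$ and use the cone property of $Q$ to conclude that each $\alpha q_i + \beta p_j$ lies in $Q$, so the combination is again a finite minorant. You merely spell out the pointwise verification and the finiteness of the $st$-element index family more explicitly than the paper does.
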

\begin{proof}
For any two minorant functions $$u_{q_{i_1},q_{i_2},\ldots ,q_{i_k}}, u_{q_{j_1},q_{j_2},\ldots ,q_{j_s}}\in \mathbb{M}(Q)$$ we have
\begin{equation}
\begin{split}
\alpha u_{q_{i_1},q_{i_2},\ldots ,q_{i_k}} + \beta u_{q_{j_1},q_{j_2},\ldots ,q_{j_s}} = \\ u_{\{\alpha q_{i_p}+\beta q_{j_r}\}} \in \mathbb{M}(Q), \\ p=1,\dots,k,\, r=1,\dots,s,
\end{split}
\end{equation}

\noindent where ${\{\alpha q_{i_p}+\beta q_{j_r}\}}$ is a set of all possible linear combinations of functions from $\{q_{i_1},q_{i_2},\ldots ,q_{i_k}\}$ and $\{q_{j_1},q_{j_2},\ldots ,q_{j_s}\}$.
\end{proof}

\begin{prop}\label{Mrestrictedisacone}
Any restriction of $\mathbb{M}(Q)$ onto a linear manifold $L$ is a cone.
\end{prop}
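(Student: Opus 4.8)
The plan is to exploit the fact that restricting a function to a linear manifold $L$ is a \emph{linear} operation that commutes both with the pointwise $\min$ and with taking non-negative linear combinations; the statement then follows almost immediately from Proposition \ref{Misacone}. First I would fix notation for the object in question: writing $u|_L$ for the restriction of a function $u$ to $L$, the restriction of $\mathbb{M}(Q)$ onto $L$ is the set $\mathbb{M}(Q)|_L = \{u|_L \mid u \in \mathbb{M}(Q)\}$ of functions on $L$. To show this is a cone I must verify closure under non-negative linear combinations: given $u,v \in \mathbb{M}(Q)$ and scalars $\alpha,\beta \geq 0$, the function $\alpha(u|_L)+\beta(v|_L)$ must again lie in $\mathbb{M}(Q)|_L$.

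The key observation is that restriction commutes with linear combination pointwise. For every $x \in L$ one has $(\alpha u + \beta v)(x) = \alpha u(x) + \beta v(x)$, so that $\alpha(u|_L) + \beta(v|_L) = (\alpha u + \beta v)|_L$. By Proposition \ref{Misacone}, $\mathbb{M}(Q)$ is a cone, hence $\alpha u + \beta v \in \mathbb{M}(Q)$, and therefore its restriction $(\alpha u + \beta v)|_L$ belongs to $\mathbb{M}(Q)|_L$ by definition. This closes the argument. Phrased abstractly: the restriction map $u \mapsto u|_L$ is linear, and the image of a cone under a linear map is a cone.

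There is no serious obstacle here, but the one point deserving care is making the identification $\mathbb{M}(Q)|_L = \mathbb{M}(Q|_L)$ precise if one wishes to keep the \emph{minorant} structure manifest rather than merely invoking linearity of the image. This requires checking that $\min$ commutes with precomposition by the inclusion $L \hookrightarrow R^m$, i.e. $\min(q_1,\ldots,q_s)|_L = \min(q_1|_L,\ldots,q_s|_L)$, which holds because the minimum is evaluated pointwise and every point of $L$ is a point of $R^m$. Parametrizing $L$ as $x = x_0 + \sum_j t_j e_j$ turns each $q_i|_L$ into a concrete function of the parameters $t_j$, and the same linearity argument shows $Q|_L$ is itself a cone; applying Proposition \ref{Misacone} to $Q|_L$ then yields the result in this second, equivalent form.
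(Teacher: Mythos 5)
Your proposal is correct, and your primary argument is a slightly different (and arguably cleaner) route than the paper's. The paper works on the generating cone first: it defines $Q|_L$, asserts that $Q|_L$ is a cone, and then invokes the identity $\mathbb{M}(Q)|_L = \mathbb{M}(Q|_L)$ so that Proposition \ref{Misacone} can be applied to $Q|_L$. You instead apply Proposition \ref{Misacone} to $Q$ itself and push the result forward through the restriction map: since $\alpha(u|_L)+\beta(v|_L)=(\alpha u+\beta v)|_L$ and $\alpha u+\beta v\in\mathbb{M}(Q)$, the combination lies in $\mathbb{M}(Q)|_L$ by definition. This is just the observation that the image of a cone under a linear map is a cone, and it entirely bypasses the identity $\mathbb{M}(Q)|_L = \mathbb{M}(Q|_L)$, which the paper states without justification (it requires that the pointwise $\min$ commutes with restriction, precisely the point you flag and verify in your closing paragraph). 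So your argument buys a shorter chain of dependencies and makes explicit the one step the paper glosses over; the paper's version keeps the minorant structure of the restricted family visible, which is what the subsequent algorithmic discussion actually uses. Both are valid, and your final paragraph reconciles the two.
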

\begin{proof}
Let us denote $q(x)|_L$ a restriction of $q(x)$ function onto $L$, i.e. $q(x)|_L = \{q(x)|x\in L\}$. $q(x)|_L$ is a part of $Q$.
Set of all $q(x)|_L$ forms a restriction $Q|_L$ of $Q$ onto $L$. $Q|_L$ is a cone, hence, $\mathbb{M}(Q)|_L = \mathbb{M}(Q|_L)$ is a cone (Proposition \ref{Misacone}).
\end{proof}

\begin{definition}{\it Splitting algorithm} minimizing $$u_{q_{1},q_{2},\ldots ,q_{n}}(x)$$ is defined as {Algorithm \ref{MinorantMinimum}}.
\end{definition}

\begin{algorithm}
\caption{Finding local minimum of a minorant function $u_{q_{1},q_{2},\ldots ,q_{n}}(x)$}\label{MinorantMinimum}
\begin{algorithmic}[1]
\Procedure{Minimizing minorant function}{}
\State $\textit{initialize } x \gets x_0$
\BState \emph{repeat until stopping criterion has been met}:
\State $\textit{compute multiindex } I_{q_1,q_2,\ldots ,q_s}(x)$
\State \textbf{for all } $i\in I_{q_1,q_2,\ldots ,q_s}(x)$
\State $x_i = \argmin\,q_i(x)$
\State \textbf{end for}
\State $\textit{select optimal } x_i:$
\State $x_{opt} \gets \argmin_{x_i} u(x_i)$
\State $x \gets x_{opt}$
\State $\textit{stopping criterion:}$ check if the multiindex
$I_{q_1,q_2,\ldots ,q_s}(x)$ does not change compared
to the previous iterationf
\EndProcedure
\end{algorithmic}
\end{algorithm}

\begin{theorem}\label{theoremConvergence} Splitting algorithm ({Algorithm \ref{MinorantMinimum}}) for minimizing $u_{q_{1},q_{2},\ldots ,q_{n}}(x)$ converges in a finite number of steps.
\end{theorem}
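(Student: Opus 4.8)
The plan is to combine a monotonicity property of the objective with the observation that the algorithm can only ever visit a finite set of candidate points, and then to connect the stabilization of the objective to the stopping criterion. Throughout I write $u = u_{q_1,\ldots,q_n}$ for the minorant and, for the current iterate $x$, abbreviate its multiindex $I_{q_1,\ldots,q_n}(x)$ by $I(x)$.

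First I would establish that the value of $u$ never increases along the iterates. Fix the current point $x$, so that $u(x) = q_i(x)$ for every active index $i \in I(x)$. For each such $i$ the algorithm forms the candidate $x_i = \argmin q_i$, and since $x_i$ minimizes $q_i$ we have $q_i(x_i) \le q_i(x) = u(x)$. Because $u$ is a pointwise minimum of the $q_j$, it follows that $u(x_i) \le q_i(x_i) \le u(x)$. The algorithm then moves to $x_{opt} = \argmin_{x_i} u(x_i)$, whence $u(x_{opt}) = \min_{i \in I(x)} u(x_i) \le u(x)$; the objective is non-increasing at every step.

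Next I would record that the iterates are confined to a finite set. After the very first iteration the new point is always one of the minimizers $\argmin q_i$, $i = 1,\ldots,n$, of which there are at most $n$ distinct ones. Consequently the iterates take values in a finite set $M$, the objective $u$ assumes only finitely many distinct values along the run, and only finitely many multiindices can occur. Finally I would upgrade monotonicity to termination through a dichotomy: at each step either $u$ strictly decreases, or $u(x_{opt}) = u(x)$. In the latter case, writing $x_{opt} = x_j = \argmin q_j$ for the selected active $j \in I(x)$, chaining the inequalities of the second paragraph forces $q_j(x) = q_j(x_j) = \min q_j$, i.e. the current point already minimizes the active function $q_j$; hence the iterate does not move and its multiindex is unchanged, which triggers the stopping criterion. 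Since a strict decrease can occur only finitely many times (the objective ranges over a finite value set attached to $M$), the algorithm must reach the non-decrease case and halt after finitely many steps.

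The main obstacle I anticipate is this last step, specifically ruling out an infinite cycle through distinct points that all share the same value of $u$. The argument above closes the gap only when each $\argmin q_i$ is a single, well-defined point — equivalently, when $x$ minimizing $q_j$ in value implies $x = \argmin q_j$. For the parabolic building blocks of the PQSQ potential this uniqueness holds, but in the fully general cone setting one must either assume unique minimizers or adopt the convention that $\argmin q_i$ returns the current point whenever it already minimizes $q_i$, so that a tie in the $u$-value yields a genuine fixed point rather than a rotation through equal-value states.
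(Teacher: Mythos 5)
Your proof follows essentially the same route as the paper's: monotone non-increase of $u$ along the iterates, finiteness, and the dichotomy between a strict decrease of the objective and stabilization of the multiindex (the stopping criterion). You are in fact more explicit than the paper on two points it leaves implicit --- that after the first step the iterates are confined to the finite set of minimizers $\argmin q_i$, so $u$ takes finitely many values, and that non-uniqueness of these minimizers must be excluded (or handled by a convention) to rule out cycling through distinct equal-value states --- but the underlying argument is the same.
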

\begin{proof}
Since the set of functions $\{q_{1},q_{2},\ldots ,q_{n}\}$ is finite then we only have to show that at each step the value of the function $u_{q_{1},q_{2},\ldots ,q_{n}}(x)$ can not increase. For any $x$ and the value $x' = \argmin q_i(x)$ for $i\in I_{q_1,q_2,\ldots ,q_s}(x)$ we can have only two cases:

(1) Either $I_{q_1,q_2,\ldots ,q_s}(x)=I_{q_1,q_2,\ldots ,q_s}(x')$ (convergence, and in this case $q_{i'}(x')=q_i(x')$ for any $i'\in I_{q_1,q_2,\ldots ,q_s}(x')$);

(2) Or $u_{q_{1},q_{2},\ldots ,q_{n}}(x')<u_{q_{1},q_{2},\ldots ,q_{n}}(x)$ since, accordingly to the definition (\ref{minorant}), $q_{i'}(x')<q_i(x)$, for any $i'\in I_{q_1,q_2,\ldots ,q_s}(x'), i\in I_{q_1,q_2,\ldots ,q_s}(x)$ (see Figure \ref{OptimizationFigure}).
\end{proof}

Note that in {Algorithm \ref{MinorantMinimum}} we do not specify exactly the way to find the local minimum of $q_i(x)$. To be practical, the cone $Q$ should contain only functions for which finding a local minimum is fast and explicit. Evident candidates for this role are positively defined quadratic functionals $q(x)=q_{0}+(\vec{q_1},x)+(x,\mathbb{Q}_2x)$, where $\mathbb{Q}_2$ is a positively defined symmetric matrix. Any minorant function (\ref{minorant}) constructed from positively defined quadratic functions will automatically provide subquadratic growth, since the minorant can not grow faster than any of the quadratic forms by which it is defined.

Operational definition of PQSQ given above (\ref{PQSQ_f}), corresponds to a particular form of the quadratic functional, with $\mathbb{Q}_2$ being diagonal matrix. This choice corresponds to the coordinate-wise definition of data approximation error function (\ref{distance_function}) which is particularly simple to minimize. This circumstance is used in {Algorithms \ref{PQSQ_Mean_Algorithm},\ref{PQSQ_PC1}}.

\begin{figure}[ht]
\centering\includegraphics[width=0.75\linewidth]{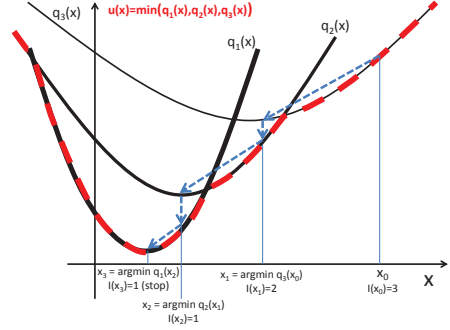}
\caption{Optimization of a one-dimensional minorant function $u(x)$, defined by three functions $q_1(x),q_2(x),q_3(x)$ each of which has a local     minimum.
Each optimization step consists in determining which $q_{I(x)}(x)=u(x)$ and making a step into the local minimum of $q_{I(x)}$. \label{OptimizationFigure}}
\end{figure}

\section{Commonly used data approximators with PQSQ potential}

\subsection{Mean value and $k$-means clustering in PQSQ approximation measure}

Mean vector $\bar{X}_L$  for a set of vectors $X=\{x_i^k\}$, $i=1,\ldots ,N, \, k=1,\ldots ,m$ and an approximation error defined by potential $f(x)$ can be defined as a point minimizing the mean error potential for all points in  $X$:

\begin{equation}\label{meanDef}
\sum_i\sum_k f(x_i^k-\bar{X}^k) \rightarrow \min.
\end{equation}

For Euclidean metrics $L_2$ ($f(x)=x^2$) it is the usual arithmetric mean.

For $L_1$ metrics ($f(x)=|x|$), (\ref{meanDef}) leads to the implicit equation $\#(x_i^k>\bar{X}^k)=\#(x_i^k<\bar{X}^k)$, where $\#$ stands for the number of points, which corresponds to the definition of median. This equation can have a non-unique solution in case of even number of points or when some data point coordinates coincide: therefore, definition of median is usually accompanied by heuristics used for breaking ties, i.e. to deal with non-uniquely defined rankings. This situation reflects the general situation of existence of multiple local minimuma and possible non-uniqueness of global minimum of (\ref{meanDef}) (Figure~\ref{MeanValueFigure}).

For PQSQ approximation measure (\ref{PQSQ_f}) it is difficult to write down an explicit formula for computing the mean value
corresponding to the global minimum of (\ref{meanDef}).
In order to find a point $\bar{X}_{PQSQ}$ minimizing mean $PQSQ$ potential, a simple iterative algorithm can be used (Algorithm~\ref{PQSQ_Mean_Algorithm}). The suggested algorithm converges to the local minimum which depends on the initial point approximation.

\begin{algorithm}
\caption{Computing PQSQ mean value}\label{PQSQ_Mean_Algorithm}
\begin{algorithmic}[1]
\Procedure{PQSQ Mean Value}{}
\State $\textit{define intervals } r_s^k, s=0, \ldots ,p, k=1, \ldots ,m$
\State $\textit{compute coefficients } a_s^k$
\State $\textit{initialize } \bar{X}_{PQSQ}  \linebreak \textit{eg., by arithmetic mean}$
\BState \emph{repeat till convergence of $\bar{X}_{PQSQ}$}:
\State \textbf{for each } \textit{coordinate } $k$
\State \textit{define sets of indices}
\begin{equation}
\begin{split}
\mathcal{R}_s^k=\{i: r_{s}^k \leq |x_i^k-\bar{X}_{PQSQ}^k| < r_{s+1}^k\}, \\ s = 0,\dots,p \nonumber
\end{split}
\end{equation}
\State \textit{compute new approximation for } $\bar{X}_{PQSQ}$:
\State $\bar{X}_{PQSQ}^k \gets \frac{\sum_{s=1, \ldots ,p}a_s^k\sum_{i\in \mathcal{R}_s^k}x_i^k}{\sum_{s=1, \ldots ,p}a_s^k|\mathcal{R}_s^k|}$
\State \textbf{end for}
\State \textbf{goto} \emph{repeat till convergence}
\EndProcedure
\end{algorithmic}
\end{algorithm}

Based on the PQSQ approximation measure and the algorithm for computing the PQSQ mean value ({Algorithm \ref{PQSQ_Mean_Algorithm}}), one can construct the PQSQ-based $k$-means clustering procedure in the usual way, splitting estimation of cluster centroids given partitioning of the data points into $k$ disjoint groups, and then re-calculating the partitioning using the PQSQ-based proximity measure.

\begin{figure*}[ht]
\centering\includegraphics[width=0.9\linewidth]{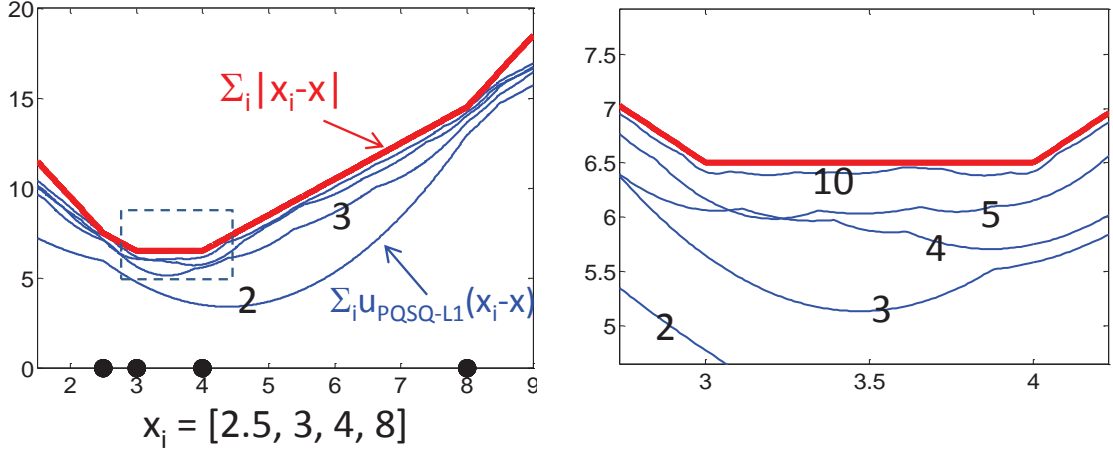}
\caption{Minimizing the error to a point (finding the mean value) for a set of 4 points (shown by black circles). Solid red line corresponds to $L_1$-based error. Thing blue lines correspond to PQSQ error potential imitating the $L_1$-based error. Several choices of PQSQ potential for different numbers of intervals (indicated by a number put on top of the line) is illustrated. On the right panel a zoom of a particular region of the left plot is shown. Neither function ($L_1$-based or PQSQ-based) possesses a unique local minimum. Moreover, $L_1$-based error function has infinite number of points corresponding to the global minimum (any number between 3 and 4), while PQSQ error function has several local minimuma in [3;4] interval which exact positions are sensitive to the concrete choice of PQSQ parameters (interval definitions).}\label{MeanValueFigure}
\end{figure*}

\subsection{Principal Component Analysis (PCA) in PQSQ metrics}

Accordingly to the classical definition of the first principal component, it is a line best fit to the data set $X$ \cite{Pearson1901On}. Let us define a line in the parametric form $\vec{y}=\vec{V}\nu+\vec{\delta}$, where $\nu \in R^1$ is the parameter. Then the first principal component will be defined by vectors $\vec{V}, \vec{\delta}$ satisfying

\begin{equation}
\sum_i\sum_k u(x_i^k-V^k\nu_i-\delta^k) \rightarrow \min,
\end{equation}

\noindent where

\begin{equation}
\nu_i = \arg \min_s \sum_k u(x_i^k-V^ks-\delta^k).
\end{equation}

The standard first principal component (PC1) corresponds to $u(x)=x^2$ when the vectors $\vec{V}, \vec{\delta}$ can be found by a simple iterative splitting algorithm for Singular Value Decomposition (SVD). If $X$ does not contain missing values then $\vec{\delta}$ is the vector of arithmetic mean values. By contrast, computing $L_1$-based principal components ($u(x)=|x|$) represents a much more challenging optimization problem \cite{Brooks2013}. Several approximative algorithms for computing $L_1$-norm PCA have been recently suggested and benchmarked \cite{Ke2005,Kwak2008,Brooks2013,brooks2012pcal1,Park2014}. To our knowledge, there have not been a general efficient algorithm suggested for computing PCA in case of arbitrary approximation measure for some monotonous function $u(x)$.

Computing PCA based on PQSQ approximation error is only slightly more complicated than computing the standard $L_2$ PCA by SVD. Here we provide a pseudo-code ({Algorithm \ref{PQSQ_PC1}}) of a simple iterative algorithm (similar to {Algorithm \ref{PQSQ_Mean_Algorithm}}) with guaranteed convergence (see Section \ref{ConvergenceSection}).

\begin{algorithm}
\caption{Computing PQSQ PCA}\label{PQSQ_PC1}
\begin{algorithmic}[1]
\Procedure{PQSQ First Principal Component}{}
\State $\textit{define intervals } r_s^k, s=0, \ldots ,p, \, k=1, \ldots ,m$
\State $\textit{compute coefficients } a_s^k$
\State $\vec{\delta} \gets \bar{X}_{PQSQ}$
\State $\textit{initialize } \vec{V} \textit{ : eg., by $L_2$-based PC1}$
\State $\textit{initialize } \{\nu_i\} \textit{ : eg., by } \linebreak $$\nu_i = \frac{\sum_k V^k(x_i^k-\delta^k)}{\sum_k (V^k)^2}$$ $
\BState \emph{repeat till convergence of $\vec{V}$}:
\State $\textit{normalize } \vec{V} \textit{ : } \vec{V} \gets \frac{\vec{V}}{\|\vec{V}\|}$
\State \textbf{for each } \textit{coordinate } $k$
\State \textit{define sets of indices}
\begin{equation}
\begin{split}
\mathcal{R}_s^k=\{i: r_{s}^k \leq |x_i^k-V^k\nu_i-\delta^k| < r_{s+1}^k\}, \\ s = 0, \ldots ,p \nonumber
\end{split}
\end{equation}
\State \textbf{end for}
\State \textbf{for each } \textit{data point } $i$ and \textit{coordinate } $k$
\State \textit{find all $s_{i,k}$ such that $i\in \mathcal{R}_{s_{i,k}}^k$}
\State \textbf{if}\textit{ all $a^k_{s_{i,k}}=0$ \textbf{then} $\nu'_i \gets 0$} \textbf{else}
\State
$$\nu'_i \gets \frac{\sum_{k}a_{s_{i,k}}^kV^k(x_i^k-\delta^k)}{\sum_{k}a_{s_{i,k}}^k(V^k)^2}$$
\State \textbf{end for}
\State \textbf{for each } \textit{coordinate } $k$
$$
V^k \gets \frac{\sum_s a_s^k \sum_{i\in \mathcal{R}_s^k}(x_i^k-\delta^k)\nu_i}{\sum_s a_s^k \sum_{i\in \mathcal{R}_s^k}(\nu_i)^2}
$$
\State \textbf{end for}
\State \textbf{for each } $i$ :
\State $\nu_i \gets \nu'_i$
\State \textbf{end for}
\State \textbf{goto} \emph{repeat till convergence}
\EndProcedure
\end{algorithmic}
\end{algorithm}

Computation of second and further principal components follows the standard deflation approach: projections of data points onto the previously computed component are subtracted from the data set, and the algorithm is applied to the residues. However, as it is the case in any non-quadratic metrics, the resulting components can be non-orthogonal or even not invariant with respect to the dataset rotation. Moreover, unlike $L_2$-based principal components, the {Algorithm \ref{PQSQ_PC1}} does not always converge to a unique global minimum; the computed components can depend on the initial estimate of $\vec{V}$. The situation is somewhat similar to the standard $k$-means algorithm. Therefore, in order to achieve the least possible approximation error to the linear subspace, $\vec{V}$ can be initialized randomly or by data vectors $\vec{x}_i$ many times and the deepest in PQSQ approximation error (\ref{distance_function}) minimum should be selected.

How does the {Algorithm \ref{MinorantMinimum}} serve a more abstract version of the {Algorithms \ref{PQSQ_Mean_Algorithm},\ref{PQSQ_PC1}}? For example, the `variance' function $m(\vec{x})=\frac{1}{N}\sum_j u(\vec{x}_j-\vec{x})$ to be minimized in {Algorithm \ref{PQSQ_Mean_Algorithm}} uses the generating functions in the form $Q = \{b_{ji}^k+\sum_k a_{ji}^k(x^k-x_j^k)^2\}$, where $i$ is the index of the interval in (\ref{PQSQ_f}). Hence, $m(x)$ is a minorant function, belonging to the cone $\mathbb{M}(Q)$, and must converge (to a local minimum) in a finite number of steps accordingly to Theorem \ref{theoremConvergence}.

\subsection{Nonlinear methods: PQSQ-based Principal \\ Graphs and Manifolds}

In a series of works, the authors of this article introduced a family of methods
for constructing principal objects
based on graph approximations (e.g., principal curves, principal manifolds, principal trees),
which allows constructing explicit non-linear data approximators
(and, more generally, approximators with non-trivial topologies, suitable for approximating,
e.g., datasets with branching or circular topology) \cite{Gorban1999, Gorban2001ihespreprint, gorban2001method, gorban2005elastic, gorban2007topological,Gorban2008Principal,Gorban2009,Gorban2010}. The methodology is
based on optimizing a piece-wise quadratic {\it elastic energy} functional (see short description below).
A convenient graphical user interface was developed with
implementation of some of these methods \cite{Gorban2014}.

Let $G$ be a simple undirected graph with set of vertices $Y$ and
set of edges $E$. For $k \geq 2$ a $k$-star in $G$ is a subgraph
with $k+1$ vertices $y_{0,1, \ldots , k} \in Y$ and $k$ edges $\{(y_0,
y_i) \ | \ i=1,\ldots , k\} \subset E$. Suppose for each $k\geq 2$, a
family $S_k$ of $k$-stars in $G$ has been selected. We call a graph
$G$ with selected families of $k$-stars $S_k$ an {\it elastic graph}
if, for all $E^{(i)} \in E $ and $S^{(j)}_k \in S_k$, the
correspondent elasticity moduli $\lambda_i > 0$ and $\mu_{kj}
> 0$ are defined. Let  $E^{(i)}(0),E^{(i)}(1)$ be vertices of an
edge $E^{(i)}$ and $S^{(j)}_k (0),\ldots , S^{(j)}_k (k)$ be vertices
of a $k$-star  $S^{(j)}_k $ (among them, $S^{(j)}_k (0)$ is the
central vertex).

For any map $\phi:Y \to R^m$ the {\it energy of the
graph} is defined as

\begin{equation}\label{elastic_energy}
\begin{split}
U^{\phi}{(G)}:=  \sum_{E^{(i)}} \lambda_i
\left\|\phi(E^{(i)}(0))-\phi(E^{(i)}(1)) \right\| ^2 + \\
+ \sum_{S^{(j)}_k}\mu_{kj} \left\|\sum _ {i=1}^k \phi(S^{(j)}_k
(i))-k\phi(S^{(j)}_k (0)) \right\|^2. \nonumber
\end{split}
\end{equation}

For a given map $\phi: Y \to R^m$ we divide the dataset $D$ into
node neighborhoods $K^y, \, y\in Y$. The set $K^y$ contains the data points for
which the node $\phi(y)$ is the closest one in $\phi(y)$. The {\it
energy of approximation} is:

\begin{equation}\label{approximation_term}
U^{\phi}_A(G,D)= \sum_{y \in Y} \sum_{ x \in K^y} w(x) \|x-
\phi(y)\|^2,
\end{equation}
where $w(x) \geq 0$ are the point weights. Simple and fast algorithm for minimization of the energy

\begin{equation}\label{globalStandardEnergy}
U^{\phi}=U^{\phi}_A(G,D)+U^{\phi}{(G)}
\end{equation}

\noindent is the splitting algorithm, in the spirit of the classical $k$-means clustering: for a given
system of sets $\{K^y \ | \ y \in Y \}$ we minimize $U^{\phi}$ (optimization step, it
is the minimization of a positive quadratic functional), then for a
given $\phi$ we find new $\{K^y\}$ (re-partitioning), and so on; stop when no change.

Application of PQSQ-based potential is straightforward in this approach. It consists
in replacing (\ref{approximation_term}) with

\begin{equation}\label{approximation_term_PQSQ}
U^{\phi}_A(G,D)= \sum_{y \in Y} \sum_{ x \in K^y} w(x) \sum_k u(x^k-
\phi(y^k)),\nonumber
\end{equation}

\noindent where $u$ is a chosen PQSQ-based error potential. Partitioning of the dataset
into $\{K^y\}$ can be also based on calculating the minimum PQSQ-based error to $y$, or can
continue enjoying nice properties of $L_2$-based distance calculation.

\section{PQSQ-based regularized regression}

\subsection{Regularizing linear regression with PQSQ potential}

One of the major application of non-Euclidean norm properties in machine learning is using non-quadratic terms for penalizing large absolute values of regression coefficients \cite{Tibshirani1996, Zou2005}. Depending on the chosen penalization term, it is possible to achieve various effects such as sparsity or grouping coefficients for redundant variables. In a general form, regularized regression solves the following optimization problem

\begin{equation}\label{RegularizedRegression}
\frac{1}{N}\sum_{i=1}^N \left(y_i-\sum_{k=1}^m\beta^k x_i^k\right)^2+\lambda f(\vec{\beta}) \rightarrow \min ,
\end{equation}
 where $N$ is the number of observations, $m$ is the number of independent variables in the matrix $\{x_i^k\}$, $\{y_i\}$ are dependent variables (to be predicted), $\lambda$ is an internal parameter controlling the strength of regularization (penalty on the amplitude of regression coefficients $\beta$), and $f(\vec{z})$ is the regularizer function, which is $f(\vec{z})=\|\vec{z}\|_{L2}^2$ for ridge regression, $f(\vec{z})= \|\vec{z}\|_{L1}$ for lasso and $f(\vec{z})=\frac{1-\alpha}{\alpha}\|\vec{z}\|_{L2}^2+\alpha\|\vec{z}\|_{L1}$ for elastic net methods correspondingly.

Here we suggest to imitate $f(x)$ with a $PQSQ$ potential function, i.e. instead of (\ref{RegularizedRegression}) solving the problem

\begin{equation}\label{RegularizedRegressionPQSQ}
\frac{1}{N}\sum_{i=1}^N\left(y_i-\sum_{k=1}^m\beta^kx_i^k\right)^2+\lambda\sum_{k=1}^m u(\beta^k) \rightarrow \min,
\end{equation}

\noindent where $u(\beta)$ is a PQSQ potential imitating {\it arbitrary} subquadratic regression regularization penalty.

Solving (\ref{RegularizedRegressionPQSQ}) is equivalent to iteratively solving a system of linear equations

\begin{equation}\label{RegularizedRegressionPQSQIterationEquation}
\begin{split}
&\frac{1}{N}\sum_{k=1}^m\beta^k\sum_{i=1}^N x_i^k x_i^j +\lambda a_{I(\beta^j)}\beta^j \\ & \;\;\;\;\; = \sum_{i=1}^N y_i x_i^j,\, j = 1,\dots , m,
\end{split}
\end{equation}

\noindent where $a_{I(\beta^j)}$ constant (where $I$ index is defined from $r_{I}\leq\beta^j<r_{I+1}$) is computed accordingly to the definition of $u(x)$ function (see (\ref{PQSQ_acoeffs})), given the estimation of $\beta^k$ regression coefficients at the current iteration. In practice, iterating (\ref{RegularizedRegressionPQSQIterationEquation}) converges in a few iterations, therefore, the algorithm can work very fast and outperform the widely used least angle regression algorithm for solving (\ref{RegularizedRegression}) in case of $L_1$ penalties.

\subsection{Introducing sparsity by `black hole' trick}

Any PQSQ potential $u(x)$ is characterized by zero derivative for $x=0$ by construction: $u'(x)|_{x=0}=0$, which means that the solution of (\ref{RegularizedRegressionPQSQ}) does not have to be sparse for any $\lambda$. Unlike pure $L_1$-based penalty, the coefficients of regression diminish with increase of $\lambda$ but there is nothing to shrink them to exact zero values, similar to the ridge regression. However, it is relatively straightforward to modify the algorithm, to achieve sparsity of the regression solution. The 'black hole' trick consists in eliminating from regression training after each iteration (\ref{RegularizedRegressionPQSQIterationEquation}) all regression coefficients $\beta^k$ smaller by absolute value than a given parameter $\epsilon$ ('black hole radius'). Those regression coefficients which have passed the 'black hole radius' are put to zero and do not have any chance to change their values in the next iterations.

\begin{figure}[t]
\centering\includegraphics[width=0.75\linewidth]{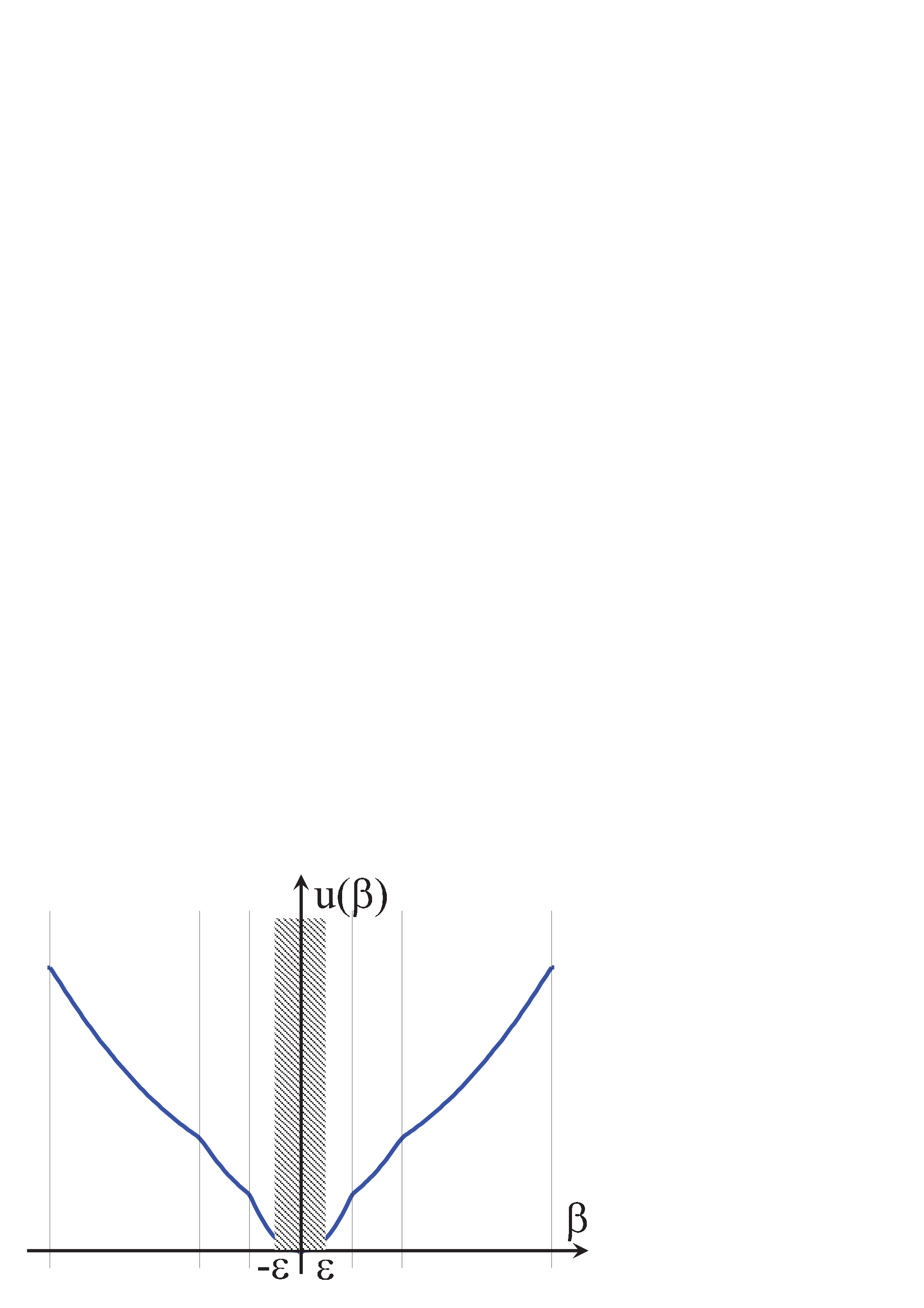}
\caption{'Black hole trick' for introducing sparsity into the PQSQ-based regularized regression. Here PQSQ function imitates $L_1$ norm (for illustration only three intervals are used to define PQSQ function). Black hole trick consists in introducing an $\epsilon$ zone (hatched territory on the plot) of the potential in the vicinity of zero such that any coefficient of regression falling down into this zone is set to zero and eliminated from further learning. It is convenient to define $\epsilon$ as the midst of the smallest interval as it is shown in this plot. \label{ExplainingBlackHole}}
\end{figure}

The optimal choice of $\epsilon$ value requires a separate study. From general considerations, it is preferable that the derivative $u'(x)|_{x=\epsilon}$ would not be very close to zero. As a pragmatic choice for the numerical examples in this article, we define $\epsilon$ as the midst of the smallest interval in the definition of PQSQ potential (see Figure~\ref{ExplainingBlackHole}), i.e. $\epsilon=r_1/2$, which guarantees far from zero $u'(x)|_{x=\epsilon}$. It might happen that this value of $\epsilon$ would collapse all $\beta^k$ to zero even without regularization (i.e., with $\lambda=0$). In this case, the 'black hole radius' is divided by half $\epsilon \leftarrow \epsilon/2$ and it is checked that for $\lambda=0$ the iterations would leave at list half of the regression coefficients. If it is not the case then the process of diminishing the 'black hole radius' repeated recursively till meeting the criterion of preserving the majority of regression coefficients. In practice, it requires only few (very fast) additional iterations of the algorithm.

As in the lasso methodology, the problem (\ref{RegularizedRegressionPQSQ}) is solved for a range of $\lambda$ values, calibrated such that the minimal $\lambda$ would select the maximum number of regression variables, while the maximum $\lambda$ value would lead to the most sparse regression (selecting only one single non-zero regression coefficient).

\section{Numerical examples}

\subsection{Practical choices of parameters}

The main parameters of PQSQ are (a) majorating function $f(x)$ and (b) decomposition of each coordinate range into $p+1$ non-overlapping intervals.
Depending on these parameters, various approximation error properties can be exploited, including those providing robustness to outlier data points.

When defining the intervals $r_j, \, j=1,\dots ,p$, it is desirable to achieve a small difference between $f(\Delta x)$ and $u(\Delta x)$ for expected argument values $\Delta x$ (differences between an estimator and the data point), and choose the suitable value of the potential trimming threshold $r_p$ in order to achieve the desired robustness properties. If no trimming is needed, then $r_p$ should be made larger than the maximum expected difference between coordinate values (maximum $\Delta x$).

In our numerical experiments we used the following definition of intervals. For any data coordinate $k$, we define a characteristic difference $D^k$, for example

\begin{equation}\label{characteristic_distance_amplitude}
D^k = \alpha_{scale}(max_i(x_i^k)-min_i(x_i^k)),
\end{equation}

\noindent where $\alpha_{scale}$ is a scaling parameter, which can be put at 1 (in this case, the approximating potential will not be trimmed). In case of existence of outliers, for defining $D^k$, instead of amplitude one can use other measures such as the median absolute deviation (MAD):

\begin{equation}\label{characteristic_distance_mad}
D^k = \alpha_{scale}median_i(|x_i^k-median(\{x_i^k\})|);
\end{equation}

\noindent in this case, the scaling parameter should be made larger, i.e. $\alpha_{scale}=10$, if no trimming is needed.

After defining $D^k$ we use the following definition of intervals:

\begin{equation}\label{intervals_definition}
r_j^k = D^k\frac{j^2}{p^2}\, , \, j=0, \ldots , p.
\end{equation}

More sophisticated approaches are also possible to apply such as, given the number of intervals $p$ and the majorating function $f(x)$, choose $r_j,\, j=1,\dots ,p$ in order to minimize the maximal difference
$$
d=\max_x|f(x)-u(x)| \rightarrow \min.
$$
The calculation of intervals is straightforward for a given value of $d$ and many smooth concave functions $f(x)$ like $f(x)=x^p$ ($0<p\leq 1$) or $f(x)=\ln(1+x)$.


\subsection{Implementation}

We provide Matlab implementation of PQSQ approximators (in particular, PCA) together with the Matlab and R code used to generate the example
figures in this article at `PQSQ-DataApproximators' GitHub repository\footnote{https://github.com/auranic/PQSQ-DataApproximators} and Matlab implementation of PQSQ-based regularized regression with build-in imitations of $L_1$ (lasso-like) and $L_1\&L_2$ mixture (elastic net-like) penalties at `PQSQ-regularized-regression' GitHub repository\footnote{https://github.com/Mirkes/PQSQ-regularized-regression/}.
The Java code implementing elastic graph-based non-linear approximator implementations is available from
the authors on request.

%

\subsection{Motivating example: dense two-cluster distribution contaminated by sparse noise}

We demonstrate the value of PQSQ-based computation of $L_1$-based PCA by constructing a simple example of data distribution consisting of a dense two-cluster component superimposed with a sparse contaminating component with relatively large variance whose co-variance does not coincide with the dense signal (Figure~\ref{bimodal}). We study the ability of PCA to withstand certain level of sparse contamination and compare it with the standard $L_2$-based PCA. In this example, without noise the first principal component coincides with the vector connecting the two cluster centers: hence, it perfectly separates them in the projected distribution. Noise interferes with the ability of the first principal component to separate the clusters to the degree when the first principal component starts to match the principal variance direction of the contaminating distribution (Figure~\ref{bimodal}A,B). In higher dimensions, not only the first but also the first two principal components are not able to distinguish two clusters, which can hide an important data structure when applying the standard data visualization tools.

In the first test  we study a switch of the first principal component from following the variance of the dense informative distribution (abscissa) to the sparse noise distribution (ordinate) as a function of the number of contaminating points, in $R^2$ (Figure~\ref{bimodal}A-C). We modeled two clusters as two 100-point samples from normal distribution centered in points $[-1;0]$ and $[1;0]$ with isotropic variance with the standard deviation 0.1. The sparse noise distribution was modeled as a $k$-point sample from the product of two Laplace distributions of zero means with the standard deviations 2 along abscissa and 4 along ordinate. The intervals for computing the PQSQ functional were defined by thresholds $R=\{0;0.01;0.1;0.5;1\}$ for each coordinate. Increasing the number of points in the contaminating distribution diminishes the average value of the abscissa coordinate of PC1, because the PC1 starts to be attracted by the contaminating distribution (Figure~\ref{bimodal}C). However, it is clear that on average PQSQ $L_1$-based PCA is able to withstand much larger amplitude of the contaminating signal (very robust up to 20-30 points, i.e. 10-20\% of strong noise contamination) compared to the standard $L_2$-based PCA (which is robust to 2-3\% of contamination).

In the second test we study the ability of the first two principal components to separate two clusters, in $R^{100}$ (Figure~\ref{bimodal}D-F). As in the first test, we modeled two clusters as two 100-point samples from normal distribution centered in points $[-1;0;\dots;0]$ and $[1;0;\dots;0]$ with isotropic variance with the standard deviation 0.1 in all 100 dimensions. The sparse noise distribution is modeled as a $k$-point sample from the product of 100 Laplace distributions of zero means with the standard deviations 1 along each coordinate besides the third coordinate (standard deviation of noise is 2) and the forth coordinate (standard deviation of noise is 4). Therefore, the first two principal component in the absence of noise are attracted by the dimensions 1 and 2, while in the presence of strong noise they are be attracted by dimensions 3 and 4, hiding the cluster structure of the dense part of the distribution. The definitions of the intervals were taken as in the first test. We measured the ability of the first two principal components to separate clusters by computing the $t$-test between the two clusters projected in the 2D-space spanned by the first principal components of the global distribution (Figure~\ref{bimodal}D-F). As one can see, the average ability of the first principal components to separate clusters is significantly stronger in the case of PQSQ $L_1$-based PCA which is able to  separate robustly the clusters even in the presence of strong noise contamination (up to 80 noise points, i.e. 40\% contamination).

\begin{figure*}[ht]
\includegraphics[width=0.95\linewidth]{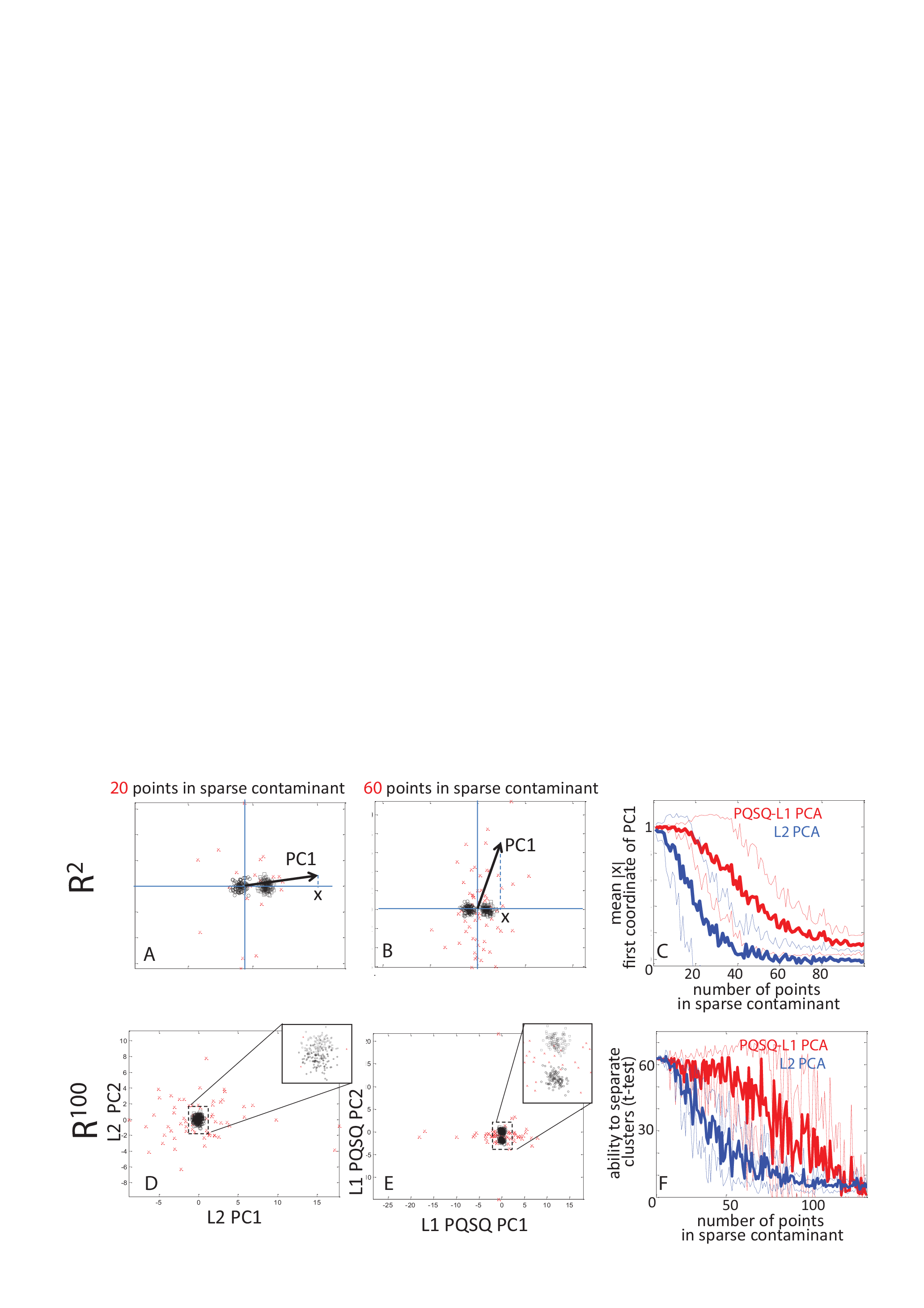}
\caption{Comparing $L_2$- and PQSQ $L_1$-based PCA using example of two-cluster distribution (100 black circles and 100 squares) contaminated by sparse noise (red crosses). A) Dense two cluster distribution contaminated by sparse distribution (20 points) of large variance. In the presence of noise, the abscissa coordinate $x$ of PC1 vector is slightly less than 1. B) Same as A) but in the case of strong contamination (60 points). The value of $x$ is much smaller in this case. C) Average absolute value of the abscissa coordinate of PC1 $|x|$ (thick lines) shown with standard interval (thin lines) for 100 samples of $k$ contaminating points. D) Projection of the data distribution on the first two principal components computed with the standard $L_2$ PCA algorithm. The number of contaminating points is 40. The cluster structure of the dense part of the distribution is completely hidden as shown in the zoom window. E) Same as in D) but computed with PQSQ $L_1$-based algorithm. The cluster structure is perfectly separable. F) The value of $t$-test computed based on the known cluster labels of the dense part of the distribution, in the projections onto the first two principal components of the global distribution. As in C), the mean values of 100 contamination samples together with confidence intervals are shown. \label{bimodal}}
\end{figure*}

\subsection{Performance/stability trade-off benchmarking of $L_1$-based PCA}

In order to compare the computation time and the robustness of PQSQ-based PCA algorithm for the case $u(x)=|x|$ with existing R-based implementations of $L_1$-based PCA methods (pcaL1 package), we followed the benchmark described in \cite{brooks2012pcal1}. We compared performance of PQSQ-based PCA based on {Algorithm \ref{PQSQ_PC1}} with several $L_1$-based PCA algorithms: L1-PCA* \cite{Brooks2013}, L1-PCA \cite{Ke2005}, PCA-PP \cite{Croux2007}, PCA-L1 \cite{Kwak2008}. As a reference point, we used the standard PCA algorithm based on quadratic norm and computed using the standard SVD iterations.

The idea of benchmarking is to generate a series of datasets of the same size ($N=1000$ objects in $m=10$ dimensions) such that the first 5 dimensions would be sampled from a uniform distribution $U(-10,10)$. Therefore, the first 5 dimensions represent `true manifold' sampled by points.

The values in the last 5 dimensions represent `noise+outlier' signal. The background noise is represented by Laplacian distribution of zero mean and 0.1 variance. The outlier signal is characterized by mean value $\mu$, dimension $p$ and frequency $\phi$. Then, for each data point with a probability $\phi$, in the first $p$ outlier dimensions a value is drawn from $Laplace(\mu,0.1)$. The rest of the values is drawn from background noise distribution.

As in \cite{brooks2012pcal1}, we've generated 1300 test sets corresponding to $\phi=0.1$, with 100 examples for each combination of $\mu=1,5,10,25$ and $p=1,2,3$.

For each test set 5 first principal components $\vec{V}_1,\dots,\vec{V}_{5}$ of unit length were computed, with corresponding point projection distributions $U^1,\dots,U^{5}$ and the mean vector $\vec{C}$. Therefore, for each initial data point $\vec{x}_i$, we have the `restored' data point $$P(\vec{x}_i)=\sum_{k=1,\dots ,5}U^k_i\vec{V}_k+\vec{C}.$$

For computing the PQSQ-based PCA we used 5 intervals without trimming. Changing the number of intervals did not significantly changed the benchmarking results.

Two characteristics were measured: (1) computation time measured as a ratio to the computation of 5 first principal components using the standard $L_2$-based PCA and (2) the sum of absolute values of the restored point coordinates in the `outlier' dimensions normalized on the number of points:

\begin{figure}[th]
\centering\includegraphics[width=0.75\linewidth]{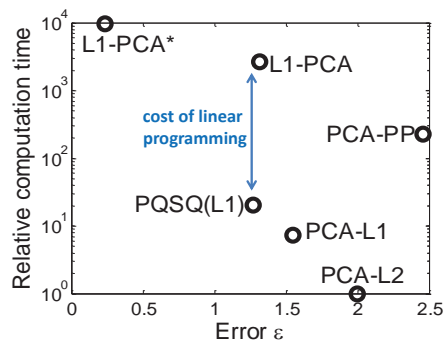}
\caption{Benchmarking several algorithms for constructing $L_1$-based PCA, using synthetic datasets representing `true' five-dimensional linear manifold contaminated by noise and outliers (located in other five dimensions). The abscissa is the error of detecti tnghe `true' manifold by a particular method and the ordinate is the computational time relative to the standard SVD ($L_2$-based PCA) computation, in logarithmic scale. The computational cost of application of linear programming methods instead of simpler iterative methods is approximately shown by an arrow. \label{benchmark}}
\end{figure}

\begin{equation}
\sigma = \frac{1}{N}\sum_{i=1,\dots , N} \sum_{k=6,\dots ,10} |P^k(\vec{x}_i)|.
\end{equation}

Formally speaking, $\sigma$ is $L_1$-based distance from the point projection onto the first five principal components to the `true' subspace. In simplistic terms, larger values of $\sigma$ correspond to the situation when the first five principal components do not represent well the first `true' dimensions having significant loadings into the `outlier dimensions'.  $\sigma=0$ if and only if the first five components do not have any non-zero loadings in the dimensions $6, \dots ,10$.

The results of the comparison averaged over all 1300 test sets are shown in Figure~\ref{benchmark}. The PQSQ-based computation of PCA outperforms by accuracy the existing heuristics such as PCA-L1 but remains computationally efficient. It is 100 times faster than L1-PCA giving almost the same accuracy. It is almost 500 times faster than the L1-PCA* algorithm, which is, however, better in accuracy (due to being robust with respect to strong outliers).  One can see from Figure~\ref{benchmark} that PQSQ-based approach is the best in accuracy
among fast iterative methods. The detailed tables of comparison for all combinations of parameters are available on GitHub\footnote{http://goo.gl/sXBvqh}. The scripts used to generate the datasets and compare the results can also be found there\footnote{https://github.com/auranic/PQSQ-DataApproximators}.

\subsection{Comparing performances of PQSQ-based regularized regression and lasso algorithms}

We compared performance of PQSQ-based regularized regression method imitating $L_1$ penalty with lasso implementation in Matlab, using 8 datasets from UCI
Machine Learning Repository \cite{Lichman:2013}, Regression Task section. In the selection of datasets we chose very different numbers of objects and variables for regression construction (Table~\ref{lassoPerfTable}). All table rows containing missing values were eliminated for the tests.
\begin{table*}[]
\centering{
\caption{Comparing time performance (in seconds, on ordinary laptop) of lasso vs PQSQ-based regularized regression imitating $L_1$ penalty. Average acceleration of PQSQ-based method vs lasso in these 8 examples is 120 fold with comparable accuracy.
\label{lassoPerfTable}}
\begin{tabular}{llllll}
\hline
{Dataset}               & {Objects} & {Variables} & {lasso} & {PQSQ} & {Ratio} \\
\hline Breast cancer                  & 47               & 31                 & 10.50          & 0.05          & 233.33         \\
Prostate cancer                & 97               & 8                  & 0.07           & 0.02          & 4.19           \\
ENB2012                        & 768              & 8                  & 0.53           & 0.03          & 19.63          \\
Parkinson                      & 5875             & 26                 & 20.30          & 0.04          & 548.65         \\
Crime                          & 1994             & 100                & 10.50          & 0.19          & 56.24          \\
Crime reduced                  & 200              & 100                & 17.50          & 0.17          & 102.94          \\
Forest fires                   & 517              & 8                  & 0.05           & 0.02          & 3.06           \\
Random regression (1000$\times$250) & 1000             & 250                & 2.82           & 0.58          & 4.86
 \\ \hline
\end{tabular}}
\end{table*}

We observed up to two orders of magnitude acceleration of PQSQ-based method compared to the lasso method implemented in Matlab (Table~\ref{lassoPerfTable}),
with similar sparsity properties and approximation power as lasso (Figure~\ref{lassoComparisonMSE}).

While comparing time performances of two methods, we've noticed that lasso (as it is implemented in Matlab) showed worse results when the number of objects in the dataset approaches the number of predictive variables (see Table~\ref{lassoPerfTable}). In order to test this observation explicitly, we took a particular dataset ('Crime') containing 1994 observations and 100 variables and compared the performance of lasso in the case of complete table and a reduced table ('Crime reduced') containing only each 10th observation. Paradoxically, lasso converges almost two times slower in the case of the smaller dataset, while the PQSQ-based algorithm worked slightly faster in this case. 

It is necessary to stress that here we compare the basic algorithms without many latest technical improvements which can be applied both to $L_1$ penalty and its PQSQ approximation (such as fitting the whole lasso path). Detailed comparison of all the existent modifications if far beyond the scope of this work.

For comparing approximation power of the PQSQ-based regularized regression and lasso, we used two versions of PQSQ potential for regression coefficients: with and without trimming. In order to represent the results, we used the `Number of non-zero parameters vs Fraction of Variance Unexplained (FVU)' plots (see two representative examples at Figure~\ref{lassoComparisonMSE}). We suggest that this type of plot is more informative in practical applications than the 'lasso plot' used to calibrate the strength of regularization, since it is a more explicit representation for optimizing the accuracy vs complexity ratio of the resulting regression.

From our testing, we can conclude that PQSQ-based regularized regression has similar properties of sparsity and approximation accuracy compared to lasso. It tends to slightly outperform lasso (to give smaller FVU) in case of $N\approx P$. Introducing trimming in most cases does not change the best FVU for a given number of selected variables, but tends to decrease its variance (has a stabilization effect). In some cases, introducing trimming is the most advantageous method (Figure~\ref{lassoComparisonMSE}B).

The GitHub `PQSQ-regularized-regression' repository contains exact dataset references and more complete report on comparing approximation ability of PQSQ-based regularized regression with lasso\footnote{https://github.com/Mirkes/PQSQ-regularized-regression/wiki}.

\begin{figure*}[th]
\centering\includegraphics[width=0.95\linewidth]{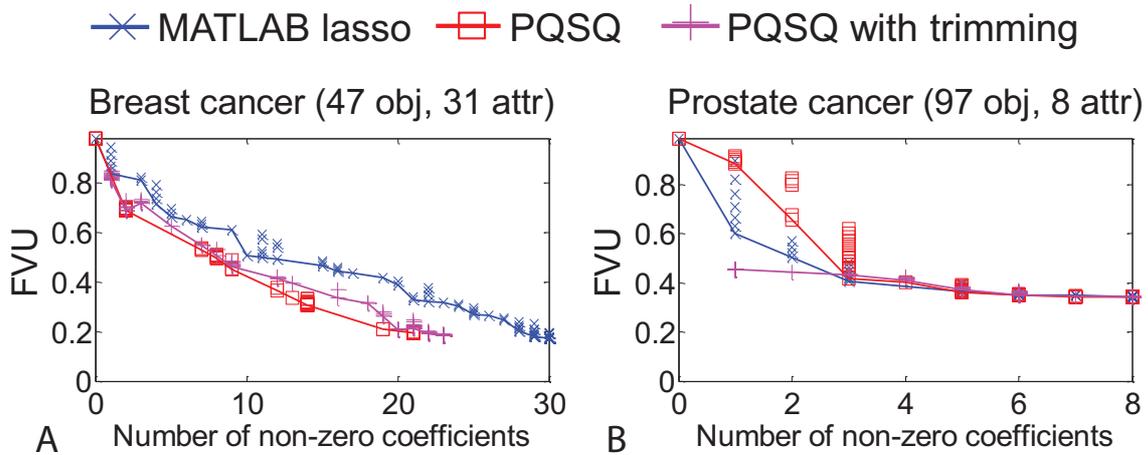}
\caption{Number of non-zero regression coefficients vs FVU plot for two example real-life datasets (A - Breast cancer Wisconsin dataset from UC Irvine Machine Learning Repository, B - original prostate cancer example from the seminal lasso paper\cite{Tibshirani1996}). Each cross shows a particular solution of the regularized regression problem. Solid lines show the best (minimal) obtained FVU within the same number of selected variables. \label{lassoComparisonMSE}}
\end{figure*}

\section{Conclusion}

In this paper we develop a new machine learning framework (theory and application) allowing one to deal with arbitrary error potentials of not-faster than quadratic growth, imitated by piece-wise quadratic function of subquadratic growth (PQSQ error potential).

We develop methods for constructing the standard data approximators (mean value, $k$-means clustering, principal components, principal graphs)
for arbitrary non-quadratic approximation error with subquadratic growth and regularized linear regression with arbitrary subquadratic penalty by using a piecewise-quadratic error functional (PQSQ potential). These problems can be solved by applying quasi-quadratic optimization procedures, which are organized as solutions of sequences of linear problems by standard and computationally efficient algorithms.

\begin{table*}\centering{
\caption{List of methods which can use PQSQ-based error potentials \label{ApplicationTable}}
\begin{tabular*}{\textwidth}{p{6cm}p{7cm}}
\hline
\multicolumn{2}{c}{Data approximation/Clustering/Manifold learning} \\
\hline
Principal Component Analysis &  Includes robust trimmed version of PCA, $L_1$-based PCA, regularized PCA, and many other PCA modifications \\
Principal curves and manifolds &  Provides possibility to use non-quadratic data approximation terms and trimmed robust version\\
Self-Organizing maps &  Same as above \\
Principal graphs/trees &  Same as above \\
K-means & Can include adaptive error potentials based on estimated error distributions inside clusters \\
\hline
\multicolumn{2}{c}{High-dimensional data mining} \\
\hline
Use of fractional quasinorms $L_p$~($0<p<1$) & Introducing fractional quasinorms in existing data-mining techniques can potentially deal with the curse of dimensionality, helping to better distinguish close from distant data points \cite{Aggarwal2001} \\
\hline
\multicolumn{2}{c}{Regularized and sparse regression} \\
\hline
Lasso & Application of PQSQ-based potentials leads to speeding up in case of large and $N\approx P$ datasets \\
Elastic net & Same as above \\
\hline
\end{tabular*}}
\end{table*}

The suggested methodology have several advantages over existing ones:

(a) \textit{Scalability}: the algorithms are computationally efficient and can be applied to large data sets containing millions of numerical values.

(b) \textit{Flexibility}: the algorithms can be adapted to any type of data metrics with subquadratic growth, even if the metrics can not be expressed in explicit form. For example, the error potential can be chosen as adaptive metrics \cite{Yang2006, Wu2009}.

(c) \textit{Built-in (trimmed) robustness}: choice of intervals in PQSQ can be done in the way to achieve a trimmed version of the standard data approximators, when points distant from the approximator do not affect the error minimization during the current optimization step.

(d) \textit{Guaranteed convergence}: the suggested algorithms converge to local or global minimum just as the corresponding predecessor algorithms based on quadratic optimization and expectation/minimization-based splitting approach.

In theoretical perspective, using PQSQ-potentials in data mining is similar to existing applications of min-plus (or, max-plus) algebras in non-linear optimization theory, where complex non-linear functions are approximated by infimum (or supremum) of finitely many `dictionary functions' \cite{Gaubert2011,Magron2015}. We can claim that just as using polynomes is a natural framework for approximating in rings of  functions, using min-plus algebra naturally leads to introduction of PQSQ-based functions and the  cones of minorants of quadratic dictionary functions.

One of the application of the suggested methodology is approximating the popular in data mining $L_1$ metrics. We show by numerical simulations that PQSQ-based approximators perform as fast as the fast heuristical algorithms for computing $L_1$-based PCA but achieve better accuracy in a previously suggested benchmark test. PQSQ-based approximators can be less accurate than the exact algorithms for optimizing $L_1$-based functions utilizing linear programming: however, they are several orders of magnitude faster. PQSQ potential can be applied in the task of regression, replacing the classical Least Squares or $L_1$-based Least Absolute Deviation methods. At the same time, PQSQ-based approximators can imitate a variety of subquadratic error potentials (not limited to $L_1$ or variations), including fractional quasinorms $L_p$ ($0<p<1$). We demonstrate that the PQSQ potential can be easily adapted to the problems of sparse regularized regression with non-quadratic penalty on regression coefficients (including imitations of lasso and elastic net). On several real-life dataset examples we show that PQSQ-based regularized regression can perform two orders of magnitude faster than the lasso algorithm implemented in the same programming environment. 

To conclude, in Table~\ref{ApplicationTable} we list possible applications of the PQSQ-based framework in machine learning. 

\section*{Aknowledgement}
This study was supported in part by Big Data Paris Science et Lettre Research University project `PSL Institute for Data Science'.

\end{document}